	\algnewcommand{\LeftComment}[1]{\Statex \(\triangleright\) #1}
\def \hlam {\hat{\lam}}
\newtheorem{theorem}{\hspace{0pt}\bf Theorem}
\newtheorem{definition}{\hspace{0pt}\bf Definition}
\title{Stability of Algebraic Neural Networks to Small Perturbations}
\name{Alejandro Parada-Mayorga and Alejandro Ribeiro}
\address{University of Pennsylvania}
\begin{document}
\maketitle
\begin{abstract}
Algebraic neural networks (AlgNNs) are composed of a cascade of layers each one associated to and algebraic signal model, and information is mapped between layers by means of a nonlinearity function. AlgNNs provide a generalization of neural network architectures where formal convolution operators are used, like for instance traditional neural networks (CNNs) and graph neural networks (GNNs). In this paper we study stability of AlgNNs on the framework of algebraic signal processing. We show how any architecture that uses a formal notion of convolution can be stable beyond particular choices of the shift operator, and this stability depends on the structure of subsets of the algebra involved in the model. We focus our attention on the case of algebras with a single generator. 
\end{abstract}
\begin{keywords}
Algebraic neural networks, algebraic signal processing, representation theory of algebras, representation theory, stability properties.
\end{keywords}
%
%
%




\section{Introduction}
\label{sec:intro}

Convolutional architectures play a central role on countless scenarios in machine learning and the numerical evidence that proves the advantages of using them is overwhelming~\cite{deeplearning_book,gamagnns}. Additonally, theoretical insights like the ones developed in~\cite{mallat_ginvscatt,bruna_iscn,bruna_groupinvrepconvnn} and~\cite{fern2019stability} have provided solid explanations about why such architectures work well. These analysis apparently different in nature, have been performed considering signals defined on different domains and with different notions of convolution but with remarkable similarities not only in the final results but also in how the derivations are performed, posing the question of whether there exists an explanation for this at a more structural level.

Several notions of stability have been considered widely in the literature, among which there are the first notion stability for traditional CNNs introduced in~\cite{mallat_ginvscatt} by Mallat, and similar notions adapted for invariant scattering networks in~\cite{bruna_iscn} and networks affected by group actions in~\cite{bruna_groupinvrepconvnn} on one side, while for irregular domains like GNNs notions of stability have been considered in~\cite{zou_stability,gamabruna_diffscattongraphs} and recently in~\cite{fern2019stability} where concrete stability calculations are performed.

In this work we discuss the notion of stability of neural structures exploiting algebraic signal models whose particular instantiations lead to the convolutional operators used in traditional CNNs and GNNs. We formulate the definition of stability using the language of the representation theory of algebras providing concrete calculations of stability when algebras with a single generator are considered, and we point out that those results for CNNs and GNNs can be obtained as particular cases. Our results highlight the universality of stability beyond particular instantiations of algebraic models whenever the algebra involved is the same. This explains the notorious similarities between the analysis and final results about stability for CNNs and GNNs. 

This paper is organized as follows. In Section 2 we discuss the basics about algebraic signal models and in Section 3 we introduce algebraic neural networks. In Section 4 we define the perturbations considered for the stability analysis and in Section 5 we discuss the stability results. Finally in section 6 we present some conclusions.



%


\section{Algebraic Signal Models} \label{sec_alg_filters}

An algebraic signal model (ASM) can be defined by the triple $(\ccalA,\ccalM,\rho),$ where $\ccalA$ is an associative algebra with unity, $\ccalM$ is a vector space with inner product, and $\rho:\ccalA\rightarrow\text{End}(\ccalM)$ is a homomorphism between $\ccalA$ and the set of endomorphisms in $\ccalM$~\cite{algSP0, algSP1, algSP2, algSP3}. In the context of representation theory of algebras, the pair $(\mathcal{M},\rho)$ is a representation of $\ccalA$. Notice that $\rho$ is a linear mapping that preserves the products in $\ccalA$.

In the ASM the elements in $\ccalM$ are the signals, the elements in $\ccalA$ are the filters and $\rho$ provides a physical realization associated to $\ccalM$  of the elements in $\ccalA$. Then, the filtered version of a signal $\bbx \in \mathcal{M}$ by an element in $a\in\ccalA$ is given by $ \bby = \rho(a) \bbx.$ As pointed out in~\cite{algSP0} the operation $ \rho(a) \bbx$ defines a general and formal notion of convolution between $\rho(a)$ and $\bbx$, which is the notion used in this paper to study stability in algebraic neural networks.

Any element in an algebra $\ccalA$ can be expressed as a polynomial function of the \textit{generators} of $\ccalA$. For the the discussion in this paper we focus on algebras that are generated by one element, i.e. all elements in $\ccalA$ can be expressed as $a=p_{\ccalA}(g)$ where $g$ is the generator of $\ccalA$ and $p_{\ccalA}$ is a polynomial function. As a consequence of the properties of $\rho$ as an homomorphism, we have that $\rho\left(p_{\ccalA}(g)\right)$ is a polynomial $p_{\ccalM}$ expressed in terms of $\rho(g)$, therefore $\rho\left(p_{\ccalA}(g)\right)=p_{\ccalM}\left(\rho(g)\right)$. Notice that the \textit{form} of $p_{\ccalA}$ and $p_{\ccalM}$ is the same, therefore for the sake of simplicity of drop the subindex to denote $\rho\left(p(g)\right)=p\left(\rho(g)\right)$. The operator $\rho(g)=\bbS$ is called shift operator.

Particular cases of the algebraic models lead to signal processing frameworks well known in the literature. For instance using the polynomial algebra, which has a single generator, it is possible to derive discrete time signal processing (DTSP), graph signal processing (GSP) and graphon signal processing (WSP) considering different representations~\cite{paradaalgnn,algSP0}, i.e. different choices of $\ccalM$ and $\rho$. DTSP uses a vector space (countably infinite dimensional) where the elements are sequences of square summable coefficients and the shift operator is the delay function, while in GSP the vector space is an $N-$dimensional vector space where $N$ is the number of nodes in the graph and the shift operator could be the adjacency matrix of the graph while in WSP the vector space is the set of functions with finite energy in the interval $[0,1]$ and the shift is associated to an integral transform whose kernel is given by a graphon~\cite{paradaalgnn,algSP0}.

%
%
\section{Algebraic Neural Networks}\label{sec_Algebraic_NNs}


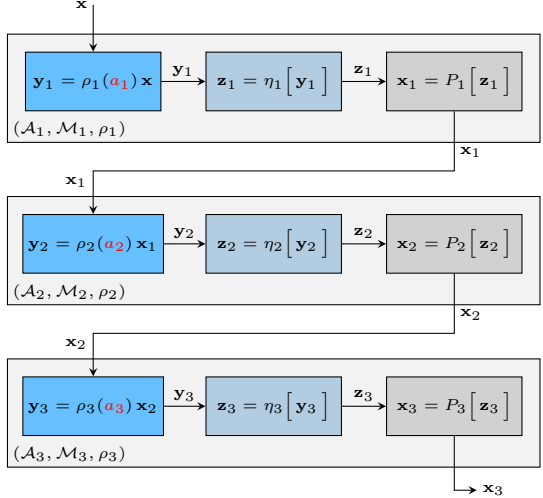
\begin{figure}
\centering

\definecolor{my_col1}{rgb}{0,0.5882,1}
\definecolor{my_col2}{rgb}{0.4980,0.6745,0.8000}
\definecolor{my_col3}{rgb}{0.6980,0.6980,0.6980}

\def \myfactor {0.6}
\def \unit  {\myfactor cm}

\tikzstyle{block} = [ rectangle,
                      minimum width = \unit,
                      minimum height = \unit,
                      fill = black,
                      draw = black,
                      text = black]

\tikzstyle{filter} = [block,
                      minimum width  = 3.0*\unit,
                      minimum height = 1.3*\unit,
                      fill=my_col1!60]

\tikzstyle{nonlinearity} = [ filter,
                             minimum width  = 3.0*\unit,
                             fill =my_col2!60]

\tikzstyle{pooling} = [ filter,
                             minimum width  = 3.0*\unit,
                             fill = my_col3!60]

\def \deltainput     {( 0.0,-1.7)}
\def \deltaoutput    {( 0.0,-1.2)}
\def \deltalayer     {3.6}
\def \deltaconnector {1.45}
\def \deltasigma     {( 4, 0.0)}

\def \one   {$\displaystyle{\mathbf{y}_{1}  = \rho_{1}(\red{a_1})\,\mathbf{x}}$}
\def \two   {$\displaystyle{\mathbf{y}_2  =  \rho_{2}(\red{a_2})\,\mathbf{x}_{1}}$}
\def \three {$\displaystyle{\mathbf{y}_3  = \rho_{3}(\red{a_3})\,\mathbf{x}_{2}}$}
\def \sigmaone   {$\displaystyle{\mathbf{z}_{1} = {\eta_{1}} \Big[\, \mathbf{y}_1 \, \Big]}$}
\def \sigmatwo   {$\displaystyle{\mathbf{z}_{2} = {\eta_{2}} \Big[\, \mathbf{y}_2 \, \Big]}$}
\def \sigmathree {$\displaystyle{\mathbf{z}_{3} = {\eta_{3}} \Big[\, \mathbf{y}_3 \, \Big]}$}
\def \proyone   {$\displaystyle{\mathbf{x}_{1} = {P_{1}} \Big[\, \mathbf{z}_1 \, \Big]}$}
\def \proytwo   {$\displaystyle{\mathbf{x}_{2} = {P_{2}} \Big[\, \mathbf{z}_2 \, \Big]}$}
\def \proythree   {$\displaystyle{\mathbf{x}_{3} = {P_{3}} \Big[\, \mathbf{z}_3 \, \Big]}$}

%
{\fontsize{6}{6}\selectfont\begin{tikzpicture}[scale = \myfactor]

  \pgfdeclarelayer{bg}     
  \pgfsetlayers{bg,main}   

  \node (input) [rectangle, minimum width = 0.1*\unit] {$\mathbf{x}$};
  \path (input.east)      ++ \deltainput node [filter]       (L1 Filter1) {\one};
  \path (L1 Filter1) ++ \deltasigma node [nonlinearity] (L1 F1)      {\sigmaone};
  \path (L1 F1) ++ \deltasigma node [pooling] (L1 F2)      {\proyone};
  \path[draw, -stealth] (L1 Filter1.east) -- node [above] {$\mathbf{y}_1$} (L1 F1.west);
  \path[draw, -stealth] (L1 F1.east) -- node [above] {$\mathbf{z}_1$} (L1 F2.west);

  \path (L1 Filter1) ++ (0,-\deltalayer) node [filter]       (L2 Filter1) {\two};
  \path (L2 Filter1) ++ \deltasigma      node [nonlinearity] (L2 F1)      {\sigmatwo};
  \path (L2 F1) ++ \deltasigma node [pooling] (L2 F2)      {\proytwo};
  \path[draw, -stealth] (L2 Filter1.east) --  node [above] {$\mathbf{y}_2$} (L2 F1.west);
  \path[draw, -stealth] (L2 F1.east) -- node [above] {$\mathbf{z}_2$} (L2 F2.west);  
  
  \path (L2 Filter1) ++ (0,-\deltalayer) node [filter]       (L3 Filter1) {\three};
  \path (L3 Filter1) ++ \deltasigma      node [nonlinearity] (L3 F1)      {\sigmathree};
  \path (L3 F1) ++ \deltasigma node [pooling] (L3 F2)      {\proythree};
  \path[draw, -stealth] (L3 Filter1.east) --  node [above] {$\mathbf{y}_3$} (L3 F1.west);
  \path[draw, -stealth] (L3 F1.east) -- node [above] {$\mathbf{z}_3$} (L3 F2.west); 

  \path[draw, -stealth] (input.east) -- (L1 Filter1.north);
  \path (L1 F2.south) ++ (0,-\deltaconnector) node [] (aux1) {};
  \path[draw, -stealth] (L1 F2.south) -- node [below right] {$\mathbf{x}_1$} (aux1.north) 
                                      --                         (aux1.north -| L2 Filter1.north) 
                                      -- node [above left]  {$\mathbf{x}_1$} (L2 Filter1.north);
  \path (L2 F2.south) ++ (0,-\deltaconnector) node [] (aux1) {};
  \path[draw, -stealth] (L2 F2.south) -- node [below right] {$\mathbf{x}_2$} (aux1.north) 
                                      --                         (aux1.north -| L2 Filter1.north) 
                                      -- node [above left]  {$\mathbf{x}_2$} (L3 Filter1.north);
  \path[draw, -stealth] (L3 F2.south) -- ++ \deltaoutput -- ++ (0.5, 0) 
                        node [right]{$\mathbf{x}_3$};

  \begin{pgfonlayer}{bg} 
      \path (L1 Filter1.west |- L1 F1.south) ++ (-0.4,-0.7)
           node [filter, anchor = south west,
                 fill = black!5, 
                 minimum width  = 11.8*\unit,
                 minimum height = 2.4*\unit,] 
        (layer)
        {}; 
       \path (layer.south west) ++ (0.0,0.0) node [above right] {$(\ccalA_1, \ccalM_1, \rho_1)$};
      \path (L1 Filter1.west |- L2 F1.south) ++ (-0.4,-0.7)
           node [filter, anchor = south west,
                 fill = black!5, 
                 minimum width  = 11.8*\unit,
                 minimum height = 2.4*\unit,] 
        (layer)
        {}; 
       \path (layer.south west) ++ (0.0,0.0) node [above right] {$(\ccalA_2, \ccalM_2, \rho_2)$};
      \path (L1 Filter1.west |- L3 F1.south)  ++ (-0.4,-0.7)
           node [filter, anchor = south west,
                 fill = black!5, 
                 minimum width  = 11.8*\unit,
                 minimum height = 2.4*\unit,] 
        (layer)
        {}; 
       \path (layer.south west) ++ (0.0,0.0) node [above right] {$(\ccalA_3, \ccalM_3, \rho_3)$};  \end{pgfonlayer}

\end{tikzpicture}} 
  \caption{Example of an Algebraic Neural Network $\Xi=\{(\ccalA_{\ell},\ccalM_{\ell},\rho_{\ell})\}_{\ell=1}^{3}$ with three layers. The input signal $\mathbf{x}$ is mapped by $\Xi$  into $\mathbf{x}_{3}$.}
  \label{fig_6}
\end{figure}

An algebraic neural network (AlgNN) consists of a stacked layered structure, where each layer is associated to an algebraic signal model. Then, if we have an AlgNN with $L$ layers, there is an algebraic model $(\ccalA_{\ell},\ccalM_{\ell},\rho_{\ell})$ associated to the $\ell$th layer. The information is mapped from the layer $\ell$ to the layer $\ell+1$ by a map $\sigma_{\ell}=P_{\ell}\circ\eta_{\ell}$ that is the composition of a point-wise nonlinearity and a pooling operator. For our discussion we consider $\sigma_{\ell}$ as a Lipschitz map with zero as fixed point.  In Fig.~\ref{fig_6}  a pictorial representation of an AlgNN is presented. Notice that the relationship between the output signal of the layer $\ell-1$ and the output signal of the layer $\ell$ is then give by 
\begin{equation}\label{eq:xl}
\bbx_{\ell}=\sigma_{\ell}\left(\rho_{\ell}(a_{\ell})\bbx_{\ell-1}\right)=\Phi (\mathbf{x}_{\ell-1},\mathcal{P}_{\ell-1},\mathcal{S}_{\ell-1}),
\end{equation}
with $a_{\ell}\in\ccalA_{\ell}$. We use the symbol $\Phi (\mathbf{x}_{\ell-1},\mathcal{P}_{\ell-1},\mathcal{S}_{\ell-1})$ to emphasize that the filters used in the processing of the signal $\bbx_{\ell-1}$ belong to the subset $\ccalP_{\ell}\subset\ccalA_{\ell}$. This will become relevant once we state stability  results. The map associated to the whole AlgNN is represented by $\Phi\left(\mathbf{x},\{ \mathcal{P}_{\ell} \}_{1}^{L},\{ \mathcal{S}_{\ell}\}_{1}^{L}\right)$ acting on an input signal $\bbx$. We point out that the filtering in each layer of the AlgNN can be performed by means of several filter banks. This allows the generation of several features, case in which the output signal associated to the feature $f$ in the layer $\ell$ is given by $\bbx_{\ell}^{f}$.

\section{Deformations in Algebraic Signal Models}\label{sec:perturbandstability}

In previous sections we mentioned the role of $\rho$ in the algebraic signal model $(\ccalA,\ccalM,\rho)$ as a physical realization of the algebraic filters in $\ccalA$, therefore it is natural in this context to associate the perturbation of the filters to $\rho$. One can think that $\rho$ will provide an ideal realization of the filters in $\ccalA$ but that in practice one gets a realization $\tilde{\rho}$ that is an approximate version of $\rho$. We consider deformations or perturbations of ASM analyzing changes in $\rho$. More specifically, we say that the triple $(\ccalA,\ccalM,\tilde{\rho})$ is a perturbed version of $(\ccalA,\ccalM,\rho)$ where $\tilde{\rho}$ is defined according to
\begin{equation}\label{eqn_def_perturbation_model_10}
   \tdrho(p(g)) = p\big(\tdrho(g)\big) 
             = p\big(\tilde\bbS\big)
\end{equation}
 where $\tilde\bbS$  is related to the shift operator $\bbS=\rho(g)$ by means of $\tbS = \bbS + \bbT(\bbS)$, and the term $\bbT(\bbS)$ is called the perturbation associated to $(\ccalA,\ccalM,\tilde{\rho})$. It is important to point out that $\tilde{\rho}$ is not necessarily a homomorphism. As pointed out in~\cite{paradaalgnn} perturbations in the domain of the signals can be reframed in terms of this perturbation model. 
 
 For our discussion we focus on deformations of the type
 \begin{equation}\label{eqn_perturbation_model_absolute_plus_relative}
   \bbT(\bbS)=\epsilon\bbI+ \bbT_{1}\bbS,
\end{equation}
 which consists of an additive or absolute perturbation $\epsilon\bbI$ where $\epsilon>0$ is a small scalar and $\bbI$ is the identity matrix  and a relative perturbation $\bbT_{1}\bbS$. The operator $\bbT_{1}$ is a compact normal operator with operator norm $\Vert\bbT_{1}\Vert \ll1$. The measure of the non commutativity between the operator $\bbT_{1}$ and $\bbS$ is modeled as $\mathbf{S}\mathbf{T}_{1}=\mathbf{T}_{c1}\mathbf{S}+\mathbf{S}\mathbf{P}_{1},
$
 %
 %
%
%
and by the factor $\delta$ given by
 \begin{equation}\label{eqn_commutation_factor}
   \delta = \max \frac{\| \bbP_{1}\|}{\| \bbT_1\|},
\end{equation}
 where $\mathbf{T}_{c1}=\sum_{i}\mu_{i}\mathbf{u}_{i}\langle\mathbf{u}_{i},\cdot\rangle$, ($\mu_{i}$, $\mathbf{u}_{i}$) is the $i$th eigenpair $\mathbf{T}_{1}$, $\mathbf{u}_{i}$ are the eigenvectors of $\mathbf{S}$, and $\langle,\rangle$ is the inner product. As proven in~\cite{paradaalgnn} the value of $\delta$ is upper bounded by the weighted difference between the eigenvectors of $\bbx$ and $\bbT_{1}$.


\section{Stability Results}\label{sec:stabilitytheorems}

Now we introduce the notion of stability for our discussion. We emphasize the notation presented in previous section, where $(\ccalA,\ccalM,\tilde{\rho})$ denotes the perturbed version of $(\ccalA,\ccalM,\rho)$, $\rho(g)=\bbS$, $\tilde{\rho}(g)=\tilde{\bbS}$ being $g$ the generator of the  algebra. 


\begin{definition}[]\label{def:stabilityoperators1} 
Given  $p(\mathbf{S})$ and $p(\tilde{\mathbf{S}})$ defined on the algebraic signal model $(\ccalA,\ccalM,\rho)$ and $(\ccalA,\ccalM,\tdrho)$  we say the operator $p(\mathbf{S})$ is Lipschitz stable if there exist constants $C_{0},C_{1}>0$ such that 
\begin{multline}\label{eq:stabilityoperators1}
\left\Vert p(\mathbf{S})\mathbf{x}  - p(\tilde{\mathbf{S}})\mathbf{x}\right\Vert
\leq
\\
\left[
C_{0}\sup_{\bbS\in\ccalS}\Vert\bbT(\bbS)\Vert+C_{1}\sup_{\bbS\in\ccalS}\big\|D_{\bbT}(\bbS)\big\|
+\mathcal{O}\left(\Vert\mathbf{T}(\mathbf{S})\Vert^{2}\right)
\right] \big\| \bbx \big\|,
\end{multline}
for all $\bbx\in\ccalM$. In \eqref{eq:stabilityoperators1} $D_{\bbT}(\bbS)$ is the Fr\'echet derivative of the perturbation operator $\bbT$. 
\end{definition}

We point out that the right hand side in eqn.~(\ref{eq:stabilityoperators1}) provides a measure of the size of the perturbation $\bbT(\bbS)$. Then, the definition~\ref{def:stabilityoperators1} says that the operators representing a given algebraic filter are stable to the action of a perturbation if the change in the operator is proportional to the size of the perturbation. It is worth pointing out that this is the same principle considered in the concept of stability presented in~\cite{mallat_ginvscatt}. Additionally, as a particular case of this definition we obtain also the notion of stability used for GNNs in~\cite{fern2019stability}.

\subsection{Characterizing subsets of the algebra}

While determining if subsets of filters in the algebra are stable or not, a characterization of such subsets is necessary. For algebras with a single generator this can be done using functions of an independent variable. It is worth pointing out that when considering algebras like the polynomial algebra of a single variable, generated by one element, it is not necessary to assume that the independent variable of the algebra takes values in a particular field. However, this type of assumption can be useful to define subsets of the algebra. For our discussion we assume that the independent variable of the algebra takes values in $\mathbb{C}$. 

In the the following definition, we describe the properties of the subsets of filters involved in the stability calculations.


\begin{definition}
Let $p:\mathbb{C}\rightarrow\mathbb{C}$ be a one variable function. Then,  we say that $p$ is Lipschitz if there exists $L_{0}>0$ such that
\begin{equation}
\vert p(\lambda)-p(\mu)\vert\leq L_{0}\vert\lambda-\mu\vert
\end{equation}
for all $\lambda, \mu\in\mathbb{C}$. Additionally, we say that $p(\lambda)$ is Lipschitz integral if there exists $L_{1}>0$ such that
\begin{equation}
\left\vert 
\lambda\frac{dp(\lambda)}{d\lambda}
\right\vert
\leq L_{1}
~\forall~\lambda.
\end{equation}
\end{definition}

We denote by $\mathcal{A}_{L_{0}}$ the subset of $\ccalA$ whose elements are Lipschitz functions with constant $L_{0}$ and by $\mathcal{A}_{L_{1}}$ the subset of elements in $\mathcal{A}$ that are Lipschitz integral with constant $L_{1}$.

\subsection{Stability of Algebraic Filters}

Before stating concrete stability calculations for the particular type of deformation we are considering, we introduce a theorem, that provides essential insights for our discussion.


\begin{theorem}[]\label{theorem:HvsFrechet}
Let $\mathcal{A}$ be an algebra generated by $g$ and let $(\mathcal{M},\rho)$ be a representation of $\mathcal{A}$ with $\rho(g)=\bbS\in\text{End}(\mathcal{M})$. Let $\tilde{\rho}(g)=\tilde{\bbS}\in\text{End}(\mathcal{M})$ where the pair
$(\mathcal{M},\tilde{\rho})$ is a perturbed version of $(\mathcal{M},\rho)$ and $\tilde{\bbS}$ is related to $\bbS$ by the perturbation model in eqn.~(\ref{eqn_perturbation_model_absolute_plus_relative}). Then, for any $p\in\mathcal{A}$ we have
\begin{equation}
\left\Vert p(\bbS)\mathbf{x}-p(\tilde{\bbS})\mathbf{x}\right\Vert
\leq 
\Vert\mathbf{x}\Vert
\left(
\left\Vert D_{p}(\mathbf{S})\left\lbrace\mathbf{T}(\mathbf{S})\right\rbrace\right\Vert + \mathcal{O}\left(\Vert\mathbf{T}(\mathbf{S})\Vert^{2}
\right)\right)
\label{eq:HSoptbound}
\end{equation}
where $D_{p}(\mathbf{S})$ is the Fr\'echet derivative of $p$ on $\mathbf{S}$.
\end{theorem}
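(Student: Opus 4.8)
The statement compares the action of a polynomial filter $p$ evaluated at the nominal shift $\bbS$ versus at the perturbed shift $\tilde\bbS=\bbS+\bbT(\bbS)$. Since $\tilde\bbS$ is an additive perturbation of $\bbS$, the natural approach is to Taylor-expand the map $\bbS\mapsto p(\bbS)$ around $\bbS$, which is precisely what the Fr\'echet derivative $D_p(\bbS)$ captures. Concretely, I would write
\begin{equation*}
p(\tilde\bbS)=p(\bbS+\bbT(\bbS))=p(\bbS)+D_p(\bbS)\{\bbT(\bbS)\}+R,
\end{equation*}
where $R$ collects the second- and higher-order terms. The heart of the proof is then to (i) justify this expansion — i.e. that $p$, as a map on endomorphisms, is Fr\'echet differentiable at $\bbS$ with derivative $D_p(\bbS)$ — and (ii) control the remainder $R$ in operator norm by $\mathcal{O}(\Vert\bbT(\bbS)\Vert^2)$. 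Once that is in hand, applying the triangle inequality and submultiplicativity gives
\begin{equation*}
\Vert p(\bbS)\bbx-p(\tilde\bbS)\bbx\Vert=\Vert (D_p(\bbS)\{\bbT(\bbS)\}+R)\bbx\Vert\le \big(\Vert D_p(\bbS)\{\bbT(\bbS)\}\Vert+\Vert R\Vert\big)\Vert\bbx\Vert,
\end{equation*}
which is exactly \eqref{eq:HSoptbound}.

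\textbf{Carrying out the expansion.} Because $\ccalA$ is singly generated, every $p\in\ccalA$ is a polynomial (or a limit of polynomials) in $g$, so $p(\bbS)$ is a polynomial in $\bbS$. For a monomial $\bbS^n$ the expansion is elementary: $(\bbS+\bbE)^n=\bbS^n+\sum_{k=0}^{n-1}\bbS^k\bbE\bbS^{n-1-k}+(\text{terms with}\ge 2\text{ factors of }\bbE)$, so $D_{\bbS^n}(\bbS)\{\bbE\}=\sum_{k=0}^{n-1}\bbS^k\bbE\bbS^{n-1-k}$ and the remainder is a sum of finitely many products each containing at least two copies of $\bbE=\bbT(\bbS)$, hence $\Vert R_n\Vert\le c_n\Vert\bbS\Vert^{\,n-2}\Vert\bbT(\bbS)\Vert^2$ for a combinatorial constant $c_n$. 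Summing over the monomials of $p$ with their coefficients gives the full $D_p(\bbS)$ as a bounded linear operator on $\mathrm{End}(\ccalM)$ and a remainder bounded by $C(p,\Vert\bbS\Vert)\Vert\bbT(\bbS)\Vert^2$, which is the claimed $\mathcal{O}(\Vert\bbT(\bbS)\Vert^2)$. For $p$ given by a convergent power series rather than a polynomial one has to check that the series of derivatives and the series of remainders both converge in operator norm; this holds on the spectral radius / resolvent domain where $p(\bbS)$ itself is defined, so it is a matter of interchanging sums and estimating tails.

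\textbf{Main obstacle.} The routine part is the monomial bookkeeping; the delicate part is making the remainder estimate uniform and genuinely $O(\Vert\bbT(\bbS)\Vert^2)$ rather than merely $o(\Vert\bbT(\bbS)\Vert)$. This requires either (a) an explicit integral/series remainder formula with an honest bound in terms of $\Vert\bbS\Vert$ (or $\Vert\tilde\bbS\Vert$) and the radius of analyticity of $p$, or (b) invoking the smoothness of the functional calculus map $\bbS\mapsto p(\bbS)$ with a quantitative second-order Taylor estimate. Since the theorem statement is phrased with an $\mathcal{O}(\cdot)$ term, I expect the intended argument is exactly the finite-sum monomial expansion above together with the observation that, because $\Vert\bbT_1\Vert\ll 1$ and $\eps$ is small, $\Vert\bbT(\bbS)\Vert$ is small enough that the higher-order terms are dominated by a constant times $\Vert\bbT(\bbS)\Vert^2$ uniformly over the perturbations in the admissible class; one should also note that the non-commutativity of $\bbS$ and $\bbT(\bbS)$ (quantified by $\delta$ in \eqref{eqn_commutation_factor}) does not obstruct this step, since the Fr\'echet derivative $D_p(\bbS)\{\cdot\}$ is defined precisely to absorb the ordering of factors — the $\delta$-dependence only enters later when one further bounds $\Vert D_p(\bbS)\{\bbT(\bbS)\}\Vert$ in terms of the Lipschitz and Lipschitz-integral constants.
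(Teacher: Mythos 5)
Your proposal is correct and takes essentially the same route as the proof the paper defers to in its cited reference: a first-order Fr\'echet/Taylor expansion $p(\bbS+\bbT(\bbS))=p(\bbS)+D_p(\bbS)\{\bbT(\bbS)\}+R$ with $\Vert R\Vert=\mathcal{O}(\Vert\bbT(\bbS)\Vert^2)$, justified for singly generated algebras by the monomial formula $D_{\bbS^n}(\bbS)\{\bbE\}=\sum_{k=0}^{n-1}\bbS^k\bbE\,\bbS^{n-1-k}$, followed by the triangle inequality and submultiplicativity of the operator norm. Your remarks on the uniformity of the second-order remainder and on the fact that the $\delta$-dependence only enters in the later bound on $\Vert D_p(\bbS)\{\bbT(\bbS)\}\Vert$ are consistent with how the argument is organized there.
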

\begin{proof}
See~\cite{paradaalgnn}
\end{proof}
Theorem~\ref{theorem:HvsFrechet} show how the filters act directly on the deformation. In particular, we can see from the therm $\left\Vert D_{p}(\mathbf{S})\left\lbrace\mathbf{T}(\mathbf{S})\right\rbrace\right\Vert $ that the Fr\'echet derivative of the physical representation of the filter acts directly on $\bbT(\bbS)$, and this is independent from the properties of $\bbT(\bbS)$, i.e.  the perturbation is not restricted to be the one we are considering for our discussion. 

Now we present a theorem where an upper bound of the right hand side of eqn.~(\ref{eq:HSoptbound}) is determined in terms of the properties of $\bbT(\bbS)$ for a specific type of filters.


\begin{theorem}\label{theorem:uppboundDH}
Let $\mathcal{A}$ be an algebra with one generator element $g$ and let $(\mathcal{M},\rho)$ be a finite or countable infinite dimensional representation of $\mathcal{A}$. Let  $(\mathcal{M},\tilde{\rho})$ be a perturbed version of $(\mathcal{M},\rho)$ associated to the perturbation model in eqn.~(\ref{eqn_perturbation_model_absolute_plus_relative}).  If $p_{\mathcal{A}}\in\mathcal{A}_{L_{0}}\cap\mathcal{A}_{L_{1}}$, then
\begin{equation}
\left\Vert D_{p}\bbT(\bbS)\right\Vert
\leq(1+\delta)
\left(L_{0}\sup_{\bbS}\Vert\bbT(\bbS)\Vert +L_{1}\sup_{\bbS}\Vert D_{\bbT}(\bbS)\Vert\right)
\label{eq:DHTS}
\end{equation}
 \end{theorem}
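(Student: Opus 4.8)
The plan is to compute the Fréchet derivative $D_p(\bbS)$ explicitly using the spectral decomposition of $\bbS$, substitute the perturbation $\bbT(\bbS) = \eps\bbI + \bbT_1\bbS$, and control each of the two resulting terms using, respectively, the Lipschitz constant $L_0$ and the Lipschitz-integral constant $L_1$, with the commutativity defect absorbed into the factor $(1+\delta)$. Since $\bbT_1$ is a compact normal operator and we may assume $\bbS$ is normal (or at least has a spectral/eigenvector decomposition in the finite- or countable-dimensional setting), write $\bbS = \sum_i \lambda_i \bbu_i\langle\bbu_i,\cdot\rangle$. For a polynomial (or analytic) $p$, the Fréchet derivative acting on a perturbation $\bbV$ has the Daleckii--Krein form $D_p(\bbS)\{\bbV\} = \sum_{i,j} p^{[1]}(\lambda_i,\lambda_j)\,\langle\bbu_i,\bbV\bbu_j\rangle\,\bbu_i\langle\bbu_j,\cdot\rangle$, where $p^{[1]}$ is the first divided difference, $p^{[1]}(\lambda,\lambda)=p'(\lambda)$ and $p^{[1]}(\lambda_i,\lambda_j) = (p(\lambda_i)-p(\lambda_j))/(\lambda_i-\lambda_j)$ for $\lambda_i\neq\lambda_j$. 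The Lipschitz hypothesis $p\in\ccalA_{L_0}$ gives $|p^{[1]}(\lambda_i,\lambda_j)|\le L_0$ uniformly, which is exactly what is needed to bound the "divided-difference" multiplier.

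Next I would split $\bbT(\bbS) = \eps\bbI + \bbT_1\bbS$ and handle the two pieces. For the absolute part $\eps\bbI$: since $\eps\bbI$ commutes with $\bbS$, it is diagonal in the $\bbu_i$ basis, so $D_p(\bbS)\{\eps\bbI\} = \sum_i p'(\lambda_i)\,\eps\,\bbu_i\langle\bbu_i,\cdot\rangle$; here one does \emph{not} want $L_1$ but rather a bound coming from $L_0$ — and indeed the statement of Theorem~\ref{theorem:uppboundDH} pairs $\sup_{\bbS}\|\bbT(\bbS)\|$ with $L_0$, so I would bound $\|D_p(\bbS)\{\eps\bbI\}\|\le (\sup_i|p'(\lambda_i)|)\,\eps$ and then note $|p'(\lambda)| = |p^{[1]}(\lambda,\lambda)|\le L_0$. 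For the relative part $\bbT_1\bbS$: using the non-commutativity model $\bbS\bbT_1 = \bbT_{c1}\bbS + \bbS\bbP_1$ with $\bbT_{c1}=\sum_i\mu_i\bbu_i\langle\bbu_i,\cdot\rangle$ sharing eigenvectors with $\bbS$, one decomposes $\bbT_1\bbS$ into a part that is "diagonal with respect to the eigenbasis of $\bbS$" (involving $\bbT_{c1}$) and a residual part of norm $\le \|\bbP_1\|\,\|\bbS\| \le \delta\|\bbT_1\|\,\|\bbS\|$. On the diagonal part the relevant multiplier becomes $p^{[1]}(\lambda_i,\lambda_i)\lambda_i\mu_i = \lambda_i p'(\lambda_i)\mu_i$, and the Lipschitz-integral hypothesis $p\in\ccalA_{L_1}$, i.e. $|\lambda p'(\lambda)|\le L_1$, bounds this by $L_1|\mu_i|\le L_1\|\bbT_1\|$; recognizing $\sup_{\bbS}\|D_{\bbT}(\bbS)\| = \|\bbT_1\|$ (the Fréchet derivative of $\bbT(\bbS)=\eps\bbI+\bbT_1\bbS$ in the variable $\bbS$ is exactly $\bbT_1$ up to the commutation bookkeeping) completes the $L_1$ term. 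The off-diagonal residual contributes the extra factor $\delta$, multiplying the $L_0$-controlled divided-difference bound, so assembling the diagonal and residual estimates yields the common prefactor $(1+\delta)$.

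The main obstacle, I expect, is the bookkeeping in the relative term: one must carefully track how the non-commutativity between $\bbT_1$ and $\bbS$, encoded through $\bbT_{c1}$ and $\bbP_1$, interacts with the off-diagonal structure of the Daleckii--Krein representation of $D_p(\bbS)$, so that the $L_0$ bound on the divided differences and the $L_1$ bound on $\lambda p'(\lambda)$ land on the correct pieces and the cross terms do not produce anything worse than $\delta$ times an already-controlled quantity. A second, more technical point is justifying the divided-difference/operator-Lipschitz manipulations in the countably-infinite-dimensional case, where one should invoke compactness of $\bbT_1$ (so the perturbation is trace-class-friendly and the double sums converge) and normality of the relevant operators; with $\|\bbT_1\|\ll 1$ the higher-order terms are genuinely $\mathcal{O}(\|\bbT(\bbS)\|^2)$ and are swept into the remainder already present in Theorem~\ref{theorem:HvsFrechet}, so they need not be re-derived here.
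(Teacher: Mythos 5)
Your plan matches the proof that the paper itself defers to via the citation \cite{paradaalgnn}: the argument there likewise expands the Fr\'echet derivative $D_p(\bbS)$ in the eigenbasis of $\bbS$ as divided differences, splits $\bbT(\bbS)=\eps\bbI+\bbT_1\bbS$ into its absolute and relative parts, uses $L_0$ to control the divided differences and $L_1$ to control $\lambda_i p'(\lambda_i)$ on the part of $\bbT_1$ aligned with that eigenbasis, and charges the non-commutativity residual encoded by $\bbP_1$ to the factor $(1+\delta)$. The one point to be careful about is that a uniform entrywise bound $\vert p^{[1]}(\lambda_i,\lambda_j)\vert\le L_0$ does not by itself bound the operator norm of the corresponding Schur product (Lipschitz functions need not be operator Lipschitz), so the divided-difference estimate should only be invoked where the perturbation is diagonal in the eigenbasis of $\bbS$ --- where it reduces to $\vert p'(\lambda_i)\vert\le L_0$ or $\vert\lambda_i p'(\lambda_i)\mu_i\vert\le L_1\Vert\bbT_1\Vert$ --- with the misaligned residual handled by norm submultiplicativity; this is precisely what your $\bbT_{c1}$ and $\bbP_1$ bookkeeping accomplishes, so the plan is sound.
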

 \begin{proof}See~\cite{paradaalgnn}\end{proof}

From theorem~\ref{theorem:uppboundDH} and taking into account Theorem~\ref{theorem:HvsFrechet}, we can conclude that AlgNNs are stable to the perturbations considered, indeed the norm of the Fr\'echet derivative of the filter acting on the perturbation is bounded by the size of the perturbation. Notice also that the non commutativity between the operators $\bbS$ and $\bbT_{1}$ does not change the functional form of the upper size in eqn.~(\ref{eq:DHTS}), however it does increase the size of the constants.

\subsection{Stability of Algebraic Neural Networks}

We have shown that algebraic filters defined in the algebraic context are stable to the deformations considered in the model~(\ref{eqn_perturbation_model_absolute_plus_relative}). In this section we discuss how this results are inherited by AlgNN. This is associated directly to the fact that the functions $\sigma_{\ell}$ mapping information  between layers are Lipschitz functions. We start pointing out the effect of the functions $\sigma_{\ell}$ mapping information between the layers of the AlgNN.  Before doing so we clarify that the AlgNN $\Xi=\left\lbrace (\mathcal{A}_{\ell},\mathcal{M}_{\ell},\rho_{\ell})\right\rbrace_{\ell=1}^{L}$ is perturbed into $\tilde{\Xi}=\left\lbrace (\mathcal{A}_{\ell},\mathcal{M}_{\ell},\tilde{\rho}_{\ell})\right\rbrace_{\ell=1}^{L}$ if $(\mathcal{A}_{\ell},\mathcal{M}_{\ell},\tilde{\rho}_{\ell})$ is a perturbed version of $(\mathcal{A}_{\ell},\mathcal{M}_{\ell},\rho_{\ell})$.


\begin{theorem}\label{theorem:stabilityAlgNN0}
Let $\Xi=\left\lbrace (\mathcal{A}_{\ell},\mathcal{M}_{\ell},\rho_{\ell})\right\rbrace_{\ell=1}^{L}$ be an algebraic neural network  with $L$ layers, one feature per layer and algebras $\mathcal{A}_{\ell}$ with a single generator.  Let  $\tilde{\Xi}=\left\lbrace (\mathcal{A}_{\ell},\mathcal{M}_{\ell},\tilde{\rho}_{\ell})\right\rbrace_{\ell=1}^{L}$ be the perturbed version of $\Xi$ by means of the perturbation model in eqn.~(\ref{eqn_perturbation_model_absolute_plus_relative}). Then, if  $\Phi\left(\mathbf{x}_{\ell-1}, \mathcal{P}_{\ell},\mathcal{S}_{\ell}\right)$ and 
$\Phi\left(\mathbf{x}_{\ell-1},\mathcal{P}_{\ell},\tilde{\mathcal{S}}_{\ell}\right)$ represent the mapping operators associated to $\Xi$ and $\tilde{\Xi}$ in the layer $\ell$ respectively, we have
\begin{multline}
\left\Vert
\Phi\left(\mathbf{x}_{\ell-1},\mathcal{P}_{\ell},\mathcal{S}_{\ell}\right)-
\Phi\left(\mathbf{x}_{\ell-1},\mathcal{P}_{\ell},\tilde{\mathcal{S}}_{\ell}\right)
\right\Vert
\leq
\\
C_{\ell}(1+\delta_{\ell})\left(L_{0}^{(\ell)} \sup_{\bbS_{\ell}}\Vert\bbT^{(\ell)}(\bbS_{\ell})\Vert 
+L_{1}^{(\ell)}\sup_{\bbS_{\ell}}\Vert D_{\mathbf{T^{(\ell)}}}(\bbS_{\ell})\Vert\right)\Vert\mathbf{x}_{\ell-1}\Vert
\label{eq:theoremstabilityAlgNN0}
\end{multline}
where $C_{\ell}$ is the Lipschitz constant of $\sigma_{\ell}$, and $\mathcal{P}_{\ell}=\mathcal{A}_{L_{0}}\cap\mathcal{A}_{L_{1}}$ represents the domain of $\rho_{\ell}$. 
The index $(\ell)$ makes reference to quantities and constants associated to the layer $\ell$.
\end{theorem}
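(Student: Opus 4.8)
The plan is to split the layer map $\Phi(\cdot,\mathcal{P}_{\ell},\mathcal{S}_{\ell})$ into its nonlinear part $\sigma_{\ell}$ and its filtering part $p_{\ell}(\bbS_{\ell})$, use the Lipschitz property of $\sigma_{\ell}$ to transfer the estimate onto the filter alone, and then invoke Theorems~\ref{theorem:HvsFrechet} and~\ref{theorem:uppboundDH} verbatim in layer $\ell$. Concretely, by~(\ref{eq:xl}) applying the $\ell$th layer to $\bbx_{\ell-1}$ produces $\sigma_{\ell}\big(\rho_{\ell}(a_{\ell})\bbx_{\ell-1}\big)$ for a filter $a_{\ell}\in\mathcal{P}_{\ell}\subset\mathcal{A}_{\ell}$; writing $a_{\ell}=p_{\ell}(g)$ with $g$ the generator of $\mathcal{A}_{\ell}$ and using $\rho_{\ell}(p_{\ell}(g))=p_{\ell}(\bbS_{\ell})$ and $\tilde{\rho}_{\ell}(p_{\ell}(g))=p_{\ell}(\tilde{\bbS}_{\ell})$ from~(\ref{eqn_def_perturbation_model_10}), we have $\Phi(\bbx_{\ell-1},\mathcal{P}_{\ell},\mathcal{S}_{\ell})=\sigma_{\ell}\big(p_{\ell}(\bbS_{\ell})\bbx_{\ell-1}\big)$ and $\Phi(\bbx_{\ell-1},\mathcal{P}_{\ell},\tilde{\mathcal{S}}_{\ell})=\sigma_{\ell}\big(p_{\ell}(\tilde{\bbS}_{\ell})\bbx_{\ell-1}\big)$.

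First I would use that $\sigma_{\ell}$ is Lipschitz with constant $C_{\ell}$ (and has $0$ as fixed point, consistent with the standing assumption on the maps between layers) to obtain
\[
\left\|\Phi(\bbx_{\ell-1},\mathcal{P}_{\ell},\mathcal{S}_{\ell})-\Phi(\bbx_{\ell-1},\mathcal{P}_{\ell},\tilde{\mathcal{S}}_{\ell})\right\|
\le C_{\ell}\left\|p_{\ell}(\bbS_{\ell})\bbx_{\ell-1}-p_{\ell}(\tilde{\bbS}_{\ell})\bbx_{\ell-1}\right\|,
\]
which reduces the claim to filter-level stability in layer $\ell$. Next I would apply Theorem~\ref{theorem:HvsFrechet} in that layer — with representation $(\mathcal{M}_{\ell},\rho_{\ell})$, shift $\bbS_{\ell}$, and the perturbation $\bbT^{(\ell)}(\bbS_{\ell})=\epsilon_{\ell}\bbI+\bbT_{1}^{(\ell)}\bbS_{\ell}$ of the form~(\ref{eqn_perturbation_model_absolute_plus_relative}) — to get $\|p_{\ell}(\bbS_{\ell})\bbx_{\ell-1}-p_{\ell}(\tilde{\bbS}_{\ell})\bbx_{\ell-1}\|\le\|\bbx_{\ell-1}\|\big(\|D_{p_{\ell}}(\bbS_{\ell})\{\bbT^{(\ell)}(\bbS_{\ell})\}\|+\mathcal{O}(\|\bbT^{(\ell)}(\bbS_{\ell})\|^{2})\big)$. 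Finally, since the argument $\mathcal{P}_{\ell}$ in $\Phi(\cdot,\mathcal{P}_{\ell},\cdot)$ records that $a_{\ell}\in\mathcal{P}_{\ell}=\mathcal{A}_{L_{0}}\cap\mathcal{A}_{L_{1}}$, the function $p_{\ell}$ is Lipschitz with constant $L_{0}^{(\ell)}$ and Lipschitz integral with constant $L_{1}^{(\ell)}$, so Theorem~\ref{theorem:uppboundDH} gives $\|D_{p_{\ell}}(\bbS_{\ell})\{\bbT^{(\ell)}(\bbS_{\ell})\}\|\le(1+\delta_{\ell})\big(L_{0}^{(\ell)}\sup_{\bbS_{\ell}}\|\bbT^{(\ell)}(\bbS_{\ell})\|+L_{1}^{(\ell)}\sup_{\bbS_{\ell}}\|D_{\bbT^{(\ell)}}(\bbS_{\ell})\|\big)$, with $\delta_{\ell}$ the commutativity factor~(\ref{eqn_commutation_factor}) for $\bbT_{1}^{(\ell)}$ and $\bbS_{\ell}$. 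Chaining the three inequalities and absorbing the quadratic remainder yields~(\ref{eq:theoremstabilityAlgNN0}).

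I do not expect a real obstacle: the statement is the composition of the Lipschitz bound for $\sigma_{\ell}$ with two filter-level results that are already available, so the only delicate points are bookkeeping — keeping the per-layer indices consistent, making sure the filter used in layer $\ell$ is indeed drawn from $\mathcal{P}_{\ell}$ (exactly what the $\mathcal{P}_{\ell}$ slot of $\Phi$ encodes), and checking that the suprema over $\mathcal{S}_{\ell}$ appearing here match those in Theorem~\ref{theorem:uppboundDH}. The closest thing to a subtlety is the higher-order term: strictly, $\mathcal{O}(\|\bbT^{(\ell)}(\bbS_{\ell})\|^{2})$ should remain on the right-hand side of~(\ref{eq:theoremstabilityAlgNN0}), and it is suppressed only under the standing small-perturbation assumption ($\|\bbT_{1}\|\ll 1$ and $\epsilon$ small).
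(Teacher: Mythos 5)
Your proposal is correct and follows exactly the argument the paper intends (the paper itself defers the proof to~\cite{paradaalgnn}, but the bound in~(\ref{eq:theoremstabilityAlgNN0}) is visibly $C_{\ell}$ times the right-hand side of~(\ref{eq:DHTS}) applied to $\|\bbx_{\ell-1}\|$, i.e., the chain Lipschitz-of-$\sigma_{\ell}$ $\rightarrow$ Theorem~\ref{theorem:HvsFrechet} $\rightarrow$ Theorem~\ref{theorem:uppboundDH} that you carry out). Your closing caveat about the suppressed $\mathcal{O}\left(\Vert\bbT^{(\ell)}(\bbS_{\ell})\Vert^{2}\right)$ remainder is well taken and is the only place where the stated inequality is not literal.
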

\begin{proof}See~\cite{paradaalgnn}\end{proof}

From theorem~\ref{theorem:stabilityAlgNN0} we see that the functions $\sigma_{\ell}$ do not affect the stability properties of the algebraic filters although a modification of the constants is produced. This highlights the goodness of $\sigma_{\ell}$ as they do not affect the stability and enrich the way the network handles information with different spectral content allowing an arbitrary degree of selectivity. 

We introduce our final result of stability for a general AlgNN with $L$ layers and one generator.


\begin{theorem}\label{theorem:stabilityAlgNN1}
Let $\Xi=\left\lbrace (\mathcal{A}_{\ell},\mathcal{M}_{\ell},\rho_{\ell})\right\rbrace_{\ell=1}^{L}$ be an algebraic neural network  with $L$ layers, one feature per layer and algebras $\mathcal{A}_{\ell}$ with a single generator.  Let  $\tilde{\Xi}=\left\lbrace (\mathcal{A}_{\ell},\mathcal{M}_{\ell},\tilde{\rho}_{\ell})\right\rbrace_{\ell=1}^{L}$ be the perturbed version of $\Xi$ by means of the perturbation model in eqn.~(\ref{eqn_perturbation_model_absolute_plus_relative}). Then, if  $\Phi\left(\mathbf{x},\{ \mathcal{P}_{\ell} \}_{1}^{L},\{ \mathcal{S}_{\ell}\}_{1}^{L}\right)$ and 
$\Phi\left(\mathbf{x},\{ \mathcal{P}_{\ell} \}_{1}^{L},\{ \tilde{\mathcal{S}}_{\ell}\}_{1}^{L}\right)$ represent the mapping operators associated to $\Xi$ and $\tilde{\Xi}$ respectively, we have
\begin{multline}
\left\Vert
\Phi\left(\mathbf{x},\{ \mathcal{P}_{\ell} \}_{1}^{L},\{ \mathcal{S}_{\ell}\}_{1}^{L}\right)-
\Phi\left(\mathbf{x},\{ \mathcal{P}_{\ell} \}_{1}^{L},\{ \tilde{\mathcal{S}}_{\ell}\}_{1}^{L}\right)
\right\Vert
\\
\leq
\sum_{\ell=1}^{L}\boldsymbol{\Delta}_{\ell}\left(\prod_{r=\ell}^{L}C_{r}\right)\left(\prod_{r=\ell+1}^{L}B_{r}\right)
\left(\prod_{r=1}^{\ell-1}C_{r}B_{r}\right)\left\Vert\mathbf{x}\right\Vert
\label{eq:theoremstabilityAlgNN1}
\end{multline}
where $C_{\ell}$ is the Lipschitz constant of $\sigma_{\ell}$, $\Vert\rho_{\ell}(\xi)\Vert\leq B_{\ell}~\forall~\xi\in\mathcal{P}_{\ell}$, and $\mathcal{P}_{\ell}=\mathcal{A}_{L_{0}}\cap\mathcal{A}_{L_{1}}$ is the domain of $\rho_{\ell}$. The functions $\boldsymbol{\Delta}_{\ell}$ are given by
\begin{equation}\label{eq:varepsilonl}
\boldsymbol{\Delta}_{\ell}=(1+\delta_{\ell})\left(L_{0}^{(\ell)} \sup_{\bbS_{\ell}}\Vert\bbT^{(\ell)}(\bbS_{\ell})\Vert 
+L_{1}^{(\ell)}\sup_{\bbS_{\ell}}\Vert D_{\mathbf{T^{(\ell)}}}(\bbS_{\ell})\Vert\right)
\end{equation}
%
%
%
%
with the index $(\ell)$ indicating quantities and constants associated to the layer $\ell$.
\end{theorem}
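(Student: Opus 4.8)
The plan is to argue by induction on the layer index, telescoping the discrepancy between $\Xi$ and $\tilde\Xi$ one layer at a time and then unrolling the resulting linear recursion. Write $\bbx_{\ell}$ and $\tilde\bbx_{\ell}$ for the (single) feature produced at layer $\ell$ by $\Xi$ and $\tilde\Xi$, with $\bbx_{0}=\tilde\bbx_{0}=\bbx$; by eqn.~\eqref{eq:xl} this means $\bbx_{\ell}=\Phi(\bbx_{\ell-1},\ccalP_{\ell},\ccalS_{\ell})$ and $\tilde\bbx_{\ell}=\Phi(\tilde\bbx_{\ell-1},\ccalP_{\ell},\tilde\ccalS_{\ell})$, and the quantity to control is $d_{\ell}:=\|\bbx_{\ell}-\tilde\bbx_{\ell}\|$, the target being $d_{L}$.

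For the inductive step I would insert the hybrid term $\Phi(\bbx_{\ell-1},\ccalP_{\ell},\tilde\ccalS_{\ell})$ and use the triangle inequality to split $d_{\ell}$ into two pieces. The first piece, $\|\Phi(\bbx_{\ell-1},\ccalP_{\ell},\ccalS_{\ell})-\Phi(\bbx_{\ell-1},\ccalP_{\ell},\tilde\ccalS_{\ell})\|$, is exactly the left-hand side of Theorem~\ref{theorem:stabilityAlgNN0} evaluated on the signal $\bbx_{\ell-1}$, hence bounded by $C_{\ell}\boldsymbol{\Delta}_{\ell}\|\bbx_{\ell-1}\|$ with $\boldsymbol{\Delta}_{\ell}$ as in eqn.~\eqref{eq:varepsilonl}. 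The second piece, $\|\Phi(\bbx_{\ell-1},\ccalP_{\ell},\tilde\ccalS_{\ell})-\Phi(\tilde\bbx_{\ell-1},\ccalP_{\ell},\tilde\ccalS_{\ell})\|$, only involves a change of input through the \emph{same} map $\Phi(\cdot,\ccalP_{\ell},\tilde\ccalS_{\ell})=\sigma_{\ell}\circ\tilde\rho_{\ell}(a_{\ell})$; since $\sigma_{\ell}$ is $C_{\ell}$-Lipschitz and $\tilde\rho_{\ell}(a_{\ell})$ is linear with $\|\tilde\rho_{\ell}(a_{\ell})\|\le B_{\ell}$ (as discussed below), this piece is at most $C_{\ell}B_{\ell}\,d_{\ell-1}$. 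To propagate $\|\bbx_{\ell-1}\|$ I would use that $\sigma_{\ell}$ has $0$ as a fixed point and is $C_{\ell}$-Lipschitz, so $\|\bbx_{\ell}\|=\|\sigma_{\ell}(\rho_{\ell}(a_{\ell})\bbx_{\ell-1})\|\le C_{\ell}B_{\ell}\|\bbx_{\ell-1}\|$, giving $\|\bbx_{\ell-1}\|\le\big(\prod_{r=1}^{\ell-1}C_{r}B_{r}\big)\|\bbx\|$.

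Putting these together, $d_{\ell}\le C_{\ell}B_{\ell}\,d_{\ell-1}+C_{\ell}\boldsymbol{\Delta}_{\ell}\big(\prod_{r=1}^{\ell-1}C_{r}B_{r}\big)\|\bbx\|$ with $d_{0}=0$. Solving this linear recursion over $\ell=1,\dots,L$ (empty products equal to $1$) yields $d_{L}\le\sum_{\ell=1}^{L}\big(\prod_{r=\ell+1}^{L}C_{r}B_{r}\big)\,C_{\ell}\boldsymbol{\Delta}_{\ell}\,\big(\prod_{r=1}^{\ell-1}C_{r}B_{r}\big)\|\bbx\|$, and regrouping $\big(\prod_{r=\ell+1}^{L}C_{r}B_{r}\big)C_{\ell}=\big(\prod_{r=\ell}^{L}C_{r}\big)\big(\prod_{r=\ell+1}^{L}B_{r}\big)$ reproduces exactly the bound in eqn.~\eqref{eq:theoremstabilityAlgNN1}.

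I expect the main obstacle to be the justification that $\|\tilde\rho_{\ell}(a_{\ell})\|\le B_{\ell}$: the hypothesis bounds only $\|\rho_{\ell}(\xi)\|$ for $\xi\in\ccalP_{\ell}$, whereas $\tilde\rho_{\ell}(a_{\ell})=p(\bbS_{\ell}+\bbT^{(\ell)}(\bbS_{\ell}))$. One has to argue, via Theorem~\ref{theorem:HvsFrechet}, that $\|\tilde\rho_{\ell}(a_{\ell})\|\le\|\rho_{\ell}(a_{\ell})\|+\|D_{p}(\bbS_{\ell})\{\bbT^{(\ell)}(\bbS_{\ell})\}\|+\mathcal{O}(\|\bbT^{(\ell)}(\bbS_{\ell})\|^{2})\le B_{\ell}+\mathcal{O}(\|\bbT^{(\ell)}(\bbS_{\ell})\|)$, so that the extra correction is first order in the perturbation and may be folded into the higher-order remainder already carried by $\boldsymbol{\Delta}_{\ell}$; alternatively one simply postulates that the filter bound holds for the perturbed representation as well. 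The only remaining work is the mechanical bookkeeping of the three nested index ranges when unrolling the recursion, and observing that the one-feature-per-layer hypothesis removes any need to sum over features.
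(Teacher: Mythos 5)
Your proposal is correct and follows essentially the same route as the proof in the cited reference: Lipschitz continuity of $\sigma_{\ell}$, a triangle-inequality split through a hybrid term, Theorem~\ref{theorem:stabilityAlgNN0} for the single-layer discrepancy, the bound $\|\bbx_{\ell-1}\|\le\big(\prod_{r=1}^{\ell-1}C_rB_r\big)\|\bbx\|$, and unrolling the resulting linear recursion. The one caveat you flag --- that the hypothesis only bounds $\|\rho_{\ell}(\xi)\|$ and not $\|\tilde\rho_{\ell}(\xi)\|$ --- is real but is resolved exactly as you suggest, by taking the filter bound $B_{\ell}$ to hold for the perturbed representation as well (equivalently, absorbing the first-order correction into the perturbation terms), so there is no substantive gap.
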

\begin{proof}See Section~\cite{paradaalgnn}\end{proof}

This final result provides a guarantee of stability for AlgNNs at an algebraic level. It is worth remarking that:
\begin{itemize}
\item The representations used in each layer can be substantially different from one another, i.e. the pairs $(\ccalM_{\ell},\rho_{\ell})$ can change substantially from layer $\ell$ to layer $\ell+1$, and still the stability results hold. This highlights the rich variety of options for the representation and processing of information with different types of operators without affecting the stability.
\item A particular instantiation of the algebraic model considering the polynomial algebra and a vector space whose dimension equals the number of nodes in a graph, leads to results of stability for GNNs obtained in~\cite{fern2019stability}.
\item The fact that AlgNNs inherit stability from the stability of the algebraic operators resides on the properties of the maps $\sigma_{\ell}$ and the composition of several layers does not affect the functional expressions defining the stability but instead they increase the size of the constants.
\end{itemize}

We remark that this results are generalized for algebras with multiple generators, the reader interested in this extension can check the details of the proofs in~\cite{paradaalgnn} where the analysis is performed considering algebras with multiple generators.


%
%
%
%
%
%
%

\section{Conclusions}\label{sec:conclusions}

We discussed algebraic neural networks (AlgNNs) considering algebras with a single generator in order to generalize the notion of stability when considering general convolution operators. Particular choices of the algebras and their representations in a AlgNN  lead to traditional CNNs and GNNs, therefore the results obtained in this paper provide a unifying framework for the stability analysis. Indeed, we can say that stability transcend particular choices of spaces and operators whenever a formal representation is behind the structure of each layer. Additionally, we show that the physical realization of the algebraic filters as elements of $\text{End}(\ccalM)$ acts directly on the perturbation by means of its Fr\'echet derivative, and it is precisely the action of this operator on the perturbation what determines the structure of the subsets of the algebra that will guarantee stability.

%
%

%
%
%

%
%

\clearpage
\bibliographystyle{IEEEbib}
\bibliography{bibliography}

\begin{thebibliography}{10}

\bibitem{deeplearning_book}
I.~Goodfellow, Y.~Bengio, and A.~Courville,
\newblock {\em Deep Learning},
\newblock Adaptive computation and machine learning. MIT Press, 2016.

\bibitem{gamagnns}
F.~{Gama}, A.~G. {Marques}, G.~{Leus}, and A.~{Ribeiro},
\newblock ``Convolutional neural network architectures for signals supported on
  graphs,''
\newblock {\em IEEE Transactions on Signal Processing}, vol. 67, no. 4, pp.
  1034--1049, 2019.

\bibitem{mallat_ginvscatt}
Stéphane Mallat,
\newblock ``Group invariant scattering,''
\newblock {\em Communications on Pure and Applied Mathematics}, vol. 65, no.
  10, pp. 1331--1398, 2012.

\bibitem{bruna_iscn}
J.~{Bruna} and S.~{Mallat},
\newblock ``Invariant scattering convolution networks,''
\newblock {\em IEEE Transactions on Pattern Analysis and Machine Intelligence},
  vol. 35, no. 8, pp. 1872--1886, Aug 2013.

\bibitem{bruna_groupinvrepconvnn}
Joan {Bruna Estrach}, Arthur Szlam, and Yann LeCun,
\newblock ``Learning stable group invariant representations with convolutional
  networks,''
\newblock 1 2013,
\newblock 1st International Conference on Learning Representations, ICLR 2013 ;
  Conference date: 02-05-2013 Through 04-05-2013.

\bibitem{fern2019stability}
Fernando Gama, Joan Bruna, and Alejandro Ribeiro,
\newblock ``Stability properties of graph neural networks,'' 2019.

\bibitem{zou_stability}
Dongmian Zou and Gilad Lerman,
\newblock ``Graph convolutional neural networks via scattering,''
\newblock {\em Applied and Computational Harmonic Analysis}, 1 2019.

\bibitem{gamabruna_diffscattongraphs}
Fernando Gama, Joan {Bruna Estrach}, and Alejandro Ribeiro,
\newblock ``Diffusion scattering transforms on graphs,''
\newblock 1 2019,
\newblock 7th International Conference on Learning Representations, ICLR 2019 ;
  Conference date: 06-05-2019 Through 09-05-2019.

\bibitem{algSP0}
Markus Püschel and José M.~F. Moura,
\newblock ``Algebraic signal processing theory,'' 2006.

\bibitem{algSP1}
M.~{Puschel} and J.~M.~F. {Moura},
\newblock ``Algebraic signal processing theory: Foundation and 1-d time,''
\newblock {\em IEEE Transactions on Signal Processing}, vol. 56, no. 8, pp.
  3572--3585, Aug 2008.

\bibitem{algSP2}
M.~{Puschel} and J.~M.~F. {Moura},
\newblock ``Algebraic signal processing theory: 1-d space,''
\newblock {\em IEEE Transactions on Signal Processing}, vol. 56, no. 8, pp.
  3586--3599, Aug 2008.

\bibitem{algSP3}
J.~{Kovacevic} and M.~{Puschel},
\newblock ``Algebraic signal processing theory: Sampling for infinite and
  finite 1-d space,''
\newblock {\em IEEE Transactions on Signal Processing}, vol. 58, no. 1, pp.
  242--257, Jan 2010.

\bibitem{paradaalgnn}
Alejandro Parada-Mayorga and Alejandro Ribeiro,
\newblock ``Algebraic neural networks: Stability to deformations,'' 2020.

\end{thebibliography}
\clearpage

\end{document}